\documentclass[letterpaper]{article} 
\usepackage[preprint]{aaai2027}  
\usepackage[hyphens]{url}  
\usepackage{graphicx} 
\usepackage{natbib}  
\usepackage{caption} 
\usepackage{booktabs}

\usepackage{amsfonts}
\usepackage{amssymb}
\usepackage{amsmath}

\newtheorem{theorem}{Theorem}

\title{LaPrune: Controllable Differentiable Sparsity at Million Scale}
\author{
    Jakub Antczak\thanks{These authors contributed equally to this work.}\thanks{Wroc{\l}aw University of Science and Technology, Wroc{\l}aw, Poland. Jakub Antczak: \texttt{268745@student.pwr.edu.pl}; Joanna Wojciechowicz: \texttt{255747@student.pwr.edu.pl}.},
    Joanna Wojciechowicz\footnotemark[1]\footnotemark[2],
    {\L}ukasz Struski\thanks{Faculty of Mathematics and Computer Science, Jagiellonian University, Krak\'ow, Poland. {\L}ukasz Struski: \texttt{lukasz.struski@uj.edu.pl}; Jacek Tabor: \texttt{jacek.tabor@uj.edu.pl}.},
    Jacek Tabor\footnotemark[3]
}
\affiliations{
}

\begin{document}

\maketitle

\begin{abstract}
Top-$k$ selection determines which components of a sparse model remain active. Hard selection blocks gradients, while continuous relaxations often couple mask hardness to the selected mass. We introduce LaPrune, a mathematically exact-budget differentiable layer that controls the normalized second moment while preserving the selected mass. A LapSum barrier preserves the selection mass, and a normalized second-moment constraint moves the mask from a dense equal-mass allocation toward hard top-$k$ at each budget. We derive a population prediction of the saturated fraction, a near-binary limiting law, and a tight worst-case guarantee on the near-zero fraction. The normalized hardness parameter is invariant to score scale, while a fixed LapSum temperature is not. 

\end{abstract}


\section{Introduction}

Sparse computation allows neural systems to increase capacity without activating every component. Mixture-of-experts models route each token to a few experts, long-context models retain a limited number of memories, retrieval systems return a short list, and sparse autoencoders activate a small set of latents. Despite their architectural differences, these systems share the same primitive: top-$k$ selection.

At inference, this primitive is a discrete selection. During training, however, a hard top-$k$ gate blocks gradients at the boundary where the model must learn which items to retain. Continuous gates restore gradients, but they must also preserve the operational constraints of the sparse system.

Two independent controls are required. The \emph{budget} specifies a selection mass $k$, corresponding in the hard limit to $k$ active slots such as expert slots, memory entries, retrieved documents, or retained pixels. The \emph{hardness} determines how closely the continuous mask approximates a binary mask. A dense mask can distribute gradients broadly early in training, whereas a near-binary mask can approximate the deployment path later in training. Varying hardness should not change the budget.

Most differentiable top-$k$ relaxations do not provide this separation. Some return soft ranks or soft permutations instead of a directly budgeted mask. Others introduce a temperature or entropy parameter that changes sparsity, but without fixing the selection mass. Some recent linear-time operators obtain their speed by relaxing the budget constraint itself, and in the sparse regime this can produce masks whose mass is far from the intended $k$.

We introduce LaPrune, a differentiable sparse-selection layer that extends the exact LapSum soft top-$k$ operator~\cite{lapsum2025}. LapSum takes scores $r\in\mathbb{R}^n$, a budget $k$, and a user-specified temperature $t$, then returns $p_i=\sigma((r_i-b)/t)$ with a barrier $b$ chosen to satisfy $\sum_i p_i=k$. The temperature controls mask hardness, but its value depends on the scale and distribution of the scores. LaPrune replaces this scale-dependent control with a normalized parameter $\gamma\in[0,1]$. For $0<\gamma<1$, the layer determines a finite $t>0$ and a barrier $b$ so that both $\sum_i p_i=k$ and $\sum_i p_i^2=[a+(1-a)\gamma]k$ hold, where $a=k/n$. We define the endpoints directly: $\gamma=0$ gives the equal-mass mask $p_i=a$, whereas $\gamma=1$ gives the hard top-$k$ indicator. Differentiability through the implicit finite-temperature solution applies to the interior range. The first constraint preserves the selection budget, while the second controls how concentrated the mask is. Thus, unlike LapSum with a manually chosen temperature, LaPrune provides normalized hardness control without changing the budget.

Our contributions are:
\begin{itemize}
\item LaPrune, a mathematically exact-budget differentiable layer whose normalized parameter $\gamma$ controls mask hardness through a second-moment constraint,
\item a comprehensive theoretical analysis of mask saturation, providing both a predictive model for typical score distributions and a strict, distribution-free lower bound on the fraction of near-zero entries,
\item empirical evidence demonstrating that the normalized hardness parameter is robustly scale-invariant and improves feature recovery in sparse networks,
\item a highly efficient implementation that strictly enforces the selection budget while scaling seamlessly up to $n=10^7$.
\end{itemize}
\section{Related Work}
\label{sec:related}

\paragraph{Differentiable Ranking and Top-$k$.}
Prior work addresses several parts of differentiable selection. NeuralSort~\cite{neuralsort2019}, SoftSort~\cite{softsort2020}, differentiable sorting networks~\cite{diffsort2021}, and permutahedron-based operators~\cite{blondel2020} approximate ranks or permutations. Methods designed specifically for top-$k$ selection use entropic optimal transport~\cite{xie2020softtopk}, convex isotonic optimization~\cite{sander2023topk}, perturbation~\cite{berthet2020perturbed}, subset reparameterization~\cite{xie2019reparameterizable}, or sorting networks~\cite{petersen2022differentiable}. These methods primarily construct differentiable approximations of a discrete selection. Our objective is to modulate mask hardness while keeping the selected mass fixed. Sparsemax~\cite{martins2016softmax} and entmax~\cite{peters2019sparse} produce sparse probability vectors on the simplex. They enforce $\sum_i p_i=1$ and have data-dependent support, so neither operator accepts an exact user-specified budget $k$. A matched exact-budget comparison would require an additional calibration that changes their training objective. Scalar-threshold methods are closer to our setting. LapSum~\cite{lapsum2025} enforces the budget through a Laplace-CDF barrier. DFTopK~\cite{dftopk2025} obtains a closed-form threshold by relaxing $\sum_i p_i=k$. The resulting budget drift is measured in the Supplementary Materials. LaPrune adds a second-moment constraint to LapSum and converts temperature into a normalized hardness parameter.

\paragraph{Where Budgeted Selection Is Used.}
The meaning of a sparsity budget depends on the model. Mixture-of-experts systems limit the number of experts assigned to each token~\cite{shazeer2017moe,fedus2022switch}. Network pruning fixes the fraction of retained weights or channels~\cite{han2015,frankle2019lottery}, and token-pruning methods limit the tokens processed by later layers~\cite{rao2021dynamicvit}. Sparse autoencoders constrain the number of active latent features~\cite{makhzani2014ksparse,gao2024topksae}. Despite using different architectures, these applications face the same trade-off. Hard top-$k$ enforces the required budget at the cost of blocking gradients. Soft selection permits gradient-based training, but it may not preserve the requested selection mass for each sample. LaPrune preserves the budget for every selection while controlling how close the mask is to hard top-$k$.

\paragraph{Background: Exact-Budget Soft Top-$k$.}
\label{sec:background}

LapSum~\cite{lapsum2025} takes a score vector $r\in\mathbb{R}^n$, a budget $k\in\{1,\dots,n-1\}$, and a temperature $t>0$. Writing $a=k/n$, it returns a mask $p\in(0,1)^n$ whose entries are determined by a scalar barrier $b$, such that
\begin{equation*}
\begin{aligned}
p_i(t)&=\sigma\!\Big(\frac{r_i-b(t)}{t}\Big),\\
\tfrac1n\sum_{i=1}^n p_i=a
&\quad\Longleftrightarrow\quad
\sum_{i=1}^n p_i=k,
\end{aligned}
\end{equation*}
where $\sigma(u)=\tfrac12 e^{u}$ for $u\le0$ and $1-\tfrac12 e^{-u}$ for $u>0$ is the standard Laplace CDF. For fixed $r$, $k$, and $t$, the map $b\mapsto\sum_i\sigma((r_i-b(t))/t)$ is continuous and strictly decreasing from $n$ to $0$. It therefore has a unique solution satisfying $\sum_i p_i=k$. For each chosen temperature $t$, LapSum computes the corresponding barrier $b$ to preserve the budget. The temperature determines the width of the transition between rejected and selected scores. Our method replaces the direct choice of $t$ with a normalized hardness parameter and determines both $t$ and the corresponding barrier.

\paragraph{Geometric Interpretation of LapSum.}
Equivalently, if $Z_i\sim\mathrm{Laplace}(r_i,t)$, then $p_i=\Pr(Z_i>b)$. The budget equation chooses the shared barrier so that the total tail mass is $k$. Scores near the barrier receive fractional mask values and scores far from it receive values close to zero or one.

Let $g\in\mathbb{R}^n$ denote an upstream gradient. Differentiating the mask with respect to the scores while accounting for the dependence of $b$ on $r$ yields the analytical vector--Jacobian product
\begin{equation}
\label{eq:vjp}
\begin{aligned}
(\nabla_r p)^\top g
&=\kappa\odot\left(g-\langle g,q\rangle\mathbf{1}\right),\\
q_i&=\frac{\kappa_i}{\sum_j \kappa_j},
\qquad
\kappa_i=\frac{1}{2t}e^{-|r_i-b|/t}.
\end{aligned}
\end{equation}
Here $\kappa_i$ is the Laplace density at the barrier for score $r_i$, and $q$ is its normalization across scores. The correction term $\langle g,q\rangle\mathbf{1}$ accounts for the change in $b$ required to preserve $\sum_i p_i=k$. Consequently, the gradient of each mask value depends on all scores through the shared barrier.

LapSum preserves $\sum_i p_i=k$ for every $t>0$. For integer $k$ and pairwise distinct scores, its mask converges to the hard top-$k$ indicator as $t\to0$~\cite[Theorem~3.3]{lapsum2025}.

\section{LaPrune: Exact-Budget Hardness Control}
\label{sec:knob}

LapSum enforces an exact selection budget for every temperature. However, the temperature is not an interpretable measure of mask hardness, because its numerical value depends on the scale and distribution of the input scores. LaPrune replaces this scale-dependent control with a normalized hardness parameter. Given a target hardness, the layer determines the corresponding temperature, while the LapSum barrier continues to enforce the exact budget.

Let the score vector, target budget, and budget fraction be defined by
\begin{equation*}
r\in\mathbb{R}^{n},
\qquad
k\in\{1,\ldots,n-1\},
\qquad
a=\frac{k}{n}.
\end{equation*}
For every positive temperature, the mask entries are
\begin{equation*}
p_i(t)
=
\sigma\left(
\frac{r_i-b(t)}{t}
\right),
\qquad
i=1,\ldots,n,
\end{equation*}
where $\sigma$ is the Laplace cumulative distribution function and the barrier $b(t)$ is the unique solution of
\begin{equation}
\label{eq:laprune-budget}
\sum_{i=1}^{n}p_i(t)=k.
\end{equation}
\paragraph{Second Moment as a Measure of Hardness.}
The budget constraint determines the first moment of the mask
\begin{equation*}
\sum_{i=1}^{n}p_i=k.
\end{equation*}
It does not, however, specify how the mass is distributed among the entries. The same budget can be realized by a uniform mask, in which all entries are equal, or by a binary mask containing exactly $k$ active entries.

We quantify this distinction using the second moment
\begin{equation*}
M_2(p)
:=
\sum_{i=1}^{n}p_i^2.
\end{equation*}
A small second moment indicates that the mask entries are concentrated near their common mean, a diffuse mask. A large second moment indicates that the entries are concentrated near zero and one, a hard, approximately binary mask. 

\paragraph{Feasible Range of the Second Moment.}
The budget and box constraints imply
\begin{equation}
\label{eq:secmom-bounds}
ak=\frac{k^2}{n}
\leq
\sum_{i=1}^{n}p_i^2
\leq k.
\end{equation}
The lower bound is Cauchy--Schwarz and is attained only by the uniform mask $p_i=a$. The upper bound follows from $p_i^2\leq p_i$ and is attained by binary masks with exactly $k$ ones.

\paragraph{Normalized Hardness Parameter.}
The feasible interval permits a scale-free normalization of the second moment. We define
\begin{equation*}
\gamma
:=
\frac{
\sum_{i=1}^{n}p_i^2-ak
}{
k-ak
}
=
\frac{
\sum_{i=1}^{n}p_i^2-ak
}{
k(1-a)
},
\qquad
\gamma\in[0,1].
\end{equation*}
Equivalently, setting the normalized hardness imposes the second-moment constraint
\begin{equation*}
\sum_{i=1}^{n}p_i^2
=
\beta k,
\qquad
\beta
=
a+(1-a)\gamma,
\qquad
\beta\in[a,1].
\end{equation*}
Jointly rescaling the scores, barrier, and temperature leaves the mask, and therefore $\gamma$, unchanged. Because LaPrune solves for the temperature, a rescaling of the input scores produces the corresponding rescaled solution. A fixed numerical temperature does not have this invariance.

\paragraph{Temperature Determined by the Hardness Constraint.}
For every positive temperature, let the barrier be defined by Equation~\eqref{eq:laprune-budget}, and define
\begin{equation*}
M_2(t)
:=
\sum_{i=1}^{n}
\left(
\sigma\left(
\frac{r_i-b(t)}{t}
\right)
\right)^2.
\end{equation*}

\begin{theorem}[Unique Temperature]
\label{theorem:unique-temperature}
For every nonconstant score vector, $M_2(t)$ is continuous and strictly decreasing in $t$. If the scores are pairwise distinct, then $M_2(t)\to k$ as $t\downarrow0$ and $M_2(t)\to ak$ as $t\to\infty$. Consequently, every $\gamma\in(0,1)$ determines a unique temperature $t_\gamma$ satisfying the target second moment.
\end{theorem}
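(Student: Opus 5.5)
The plan is to rewrite everything in normalized coordinates $u_i(t)=(r_i-b(t))/t$, so that $p_i=\sigma(u_i)$, and to get monotonicity by differentiating the budget constraint implicitly. Write $f=\sigma'$ for the Laplace density, $f(u)=\tfrac12 e^{-|u|}$. It is continuous and strictly positive, so $\sigma$ is $C^1$ with $\sigma'>0$.

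\emph{Regularity of $b$.} The map $(b,t)\mapsto\sum_i\sigma((r_i-b)/t)$ is $C^1$ on $\mathbb{R}\times(0,\infty)$. Its $b$-derivative is $-\tfrac1t\sum_i f(u_i)<0$. By the implicit function theorem the unique barrier $b(t)$ is $C^1$ in $t$, and hence so is $M_2(t)$; in particular $M_2$ is continuous.

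\emph{Derivative of the budget constraint.} Differentiating gives $\dot u_i=-(u_i+\dot b)/t$. Differentiating $\sum_i\sigma(u_i)=k$ then yields $\sum_i f_i(u_i+\dot b)=0$, where $f_i=f(u_i)$. Hence $\dot b=-c$, with
\[
c=\frac{\sum_i f_iu_i}{\sum_i f_i}
\]
the $f$-weighted mean of the $u_i$.

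\emph{Sign of $\dot M_2$.} Differentiating $M_2=\sum_i\sigma(u_i)^2$ gives
\[
\dot M_2=-\frac{2}{t}\sum_i f_i\,p_i\,(u_i-c).
\]
Because $\sum_i f_i(u_i-c)=0$, we may subtract $\sigma(c)$ from $p_i$ and obtain
\[
\dot M_2=-\frac{2}{t}\sum_i f_i\,(u_i-c)\bigl(\sigma(u_i)-\sigma(c)\bigr).
\]
Since $\sigma$ is strictly increasing, every summand is nonnegative. A summand is zero only when $u_i=c$. Thus $\dot M_2<0$ unless all $u_i$ coincide, and all $u_i$ coincide exactly when the $r_i$ are all equal. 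So for a nonconstant score vector $M_2$ is strictly decreasing. This covariance-type rearrangement is the crux. I expect it to be the main (though short) obstacle, because one must see to add the zero term $-\sigma(c)\sum_i f_i(u_i-c)$.

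\emph{Limits.} As $t\downarrow0$ with pairwise distinct scores, the LapSum result quoted above~\cite[Theorem~3.3]{lapsum2025} gives convergence of $p$ to the hard top-$k$ indicator. Therefore $M_2(t)\to k$.

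As $t\to\infty$, note that $u_i-u_j=(r_i-r_j)/t\to0$. Let $u_{\min}$ and $u_{\max}$ be the smallest and largest entries of $u$. The budget forces $n\sigma(u_{\min})\le k\le n\sigma(u_{\max})$, so $\sigma^{-1}(a)\in[u_{\min},u_{\max}]$. The width of this interval tends to zero, so every $u_i\to\sigma^{-1}(a)$ and $p_i\to a$. Hence $M_2\to na^2=ak$. This limit holds for any score vector.

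\emph{Conclusion.} Combining these facts, $M_2$ is a continuous strictly decreasing bijection from $(0,\infty)$ onto $(ak,k)$. For $\gamma\in(0,1)$ the target $\beta k=[a+(1-a)\gamma]k$ lies strictly inside this interval. The intermediate value theorem gives existence of $t_\gamma$, and strict monotonicity gives uniqueness.
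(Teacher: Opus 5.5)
Your proposal is correct and follows essentially the same route as the paper's proof. Both differentiate the budget constraint implicitly to get $b'(t)=-\bar z_w$ and $M_2'(t)=-\tfrac{2}{t}\bigl(\sum_i w_i\bigr)\operatorname{Cov}_w(z,\sigma(z))<0$, then conclude via the endpoint limits and the intermediate value theorem. Your extra steps only fill in details the paper leaves implicit: the implicit function theorem for regularity of $b$, subtracting $\sigma(c)\sum_i f_i(u_i-c)=0$ to show the covariance is positive, and a direct squeeze argument for the $t\to\infty$ limit instead of citing LapSum.
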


The proof is given in the Supplementary Materials section "Monotonicity and Uniqueness of the Temperature." Increasing $\gamma$ decreases $t_\gamma$. The finite-temperature family is defined for $\gamma\in(0,1)$. Its limits as $\gamma\to0$ and $\gamma\to1$ are the uniform and hard top-$k$ masks, respectively. We therefore define the endpoints explicitly as $p_i=a$ at $\gamma=0$ and the hard top-$k$ indicator at $\gamma=1$.

\paragraph{Computing LaPrune: Joint Solves and Implicit Gradients.}
\label{sec:computing-laprune}

For $0<\gamma<1$, LaPrune must determine both the barrier $b$,
which preserves the selection mass, and the temperature $t$, which
realizes the requested hardness. We set $\tau=\log t$ and determine
the barrier and temperature simultaneously by solving
\begin{align}
F_1(b,\tau)
&:= \sum_{i=1}^{n} p_i-k = 0,
\nonumber\\
F_2(b,\tau)
&:= \sum_{i=1}^{n} p_i^2-\beta k = 0,
\label{eq:laprune-joint-system}
\end{align}
where
\begin{equation*}
p_i
=
\sigma\!\left(\frac{r_i-b}{t}\right),
\qquad
\beta=a+(1-a)\gamma,
\qquad
a=\frac{k}{n}.
\end{equation*}
To solve the system in \eqref{eq:laprune-joint-system}, let
\begin{equation*}
z_i=\frac{r_i-b}{t},
\qquad
d_i=\sigma'(z_i),
\end{equation*}
and define the four reductions
\begin{equation*}
\begin{alignedat}{2}
S_0 &:= \sum_i d_i,
&\qquad
S_1 &:= \sum_i d_i z_i,
\\
U_0 &:= \sum_i p_i d_i,
&\qquad
U_1 &:= \sum_i p_i d_i z_i.
\end{alignedat}
\end{equation*}
The Jacobian of the constraints with respect to $(b,\tau)$ is
\begin{equation}
J_{(b,\tau)}F
=
\begin{bmatrix}
-S_0/t & -S_1 \\
-2U_0/t & -2U_1
\end{bmatrix}.
\label{eq:laprune-constraint-jacobian}
\end{equation}
Using the Jacobian in \eqref{eq:laprune-constraint-jacobian}, each
Newton iteration computes the increment $(\Delta b,\Delta\tau)$ from
\begin{equation}
J_{(b,\tau)}F
\begin{bmatrix}
\Delta b \\
\Delta \tau
\end{bmatrix}
=
-
\begin{bmatrix}
F_1 \\
F_2
\end{bmatrix}.
\label{eq:laprune-newton-update}
\end{equation}
The parameters are then updated as
$b\leftarrow b+\Delta b$ and
$\tau\leftarrow\tau+\Delta\tau$.
Each iteration requires only elementwise operations, four reductions
over the scores, and the solution of a $2\times2$ linear system.

We initialize the barrier using an estimate of the $(1-a)$-quantile
of the score distribution. Under a Gaussian approximation, the
initial barrier is
\begin{equation*}
b_0
=
\widehat{\mu}
+
\widehat{s}\,\Phi^{-1}(1-a),
\end{equation*}
where $\widehat{\mu}$ and $\widehat{s}$ are the row-wise mean and
standard deviation. Using the near-binary approximation, we initialize
the temperature as
\begin{equation*}
t_0
\approx
\frac{4a(1-a)(1-\gamma)}
     {3\widehat{\rho}(b_0)},
\end{equation*}
where $\widehat{\rho}(b_0)$ denotes the corresponding density estimate
at the initial barrier. Large Newton updates are clipped, and a nested
bisection solver provides a robust fallback when the residuals do not
reach the prescribed tolerance. The endpoints $\gamma=0$ and
$\gamma=1$ bypass the finite-temperature solver and return the uniform
and hard top-$k$ masks, respectively.

For the backward pass, gradients are obtained by differentiating the
converged solution directly, without unrolling the Newton iterations.
For any scalar input $x\in\{r_1,\ldots,r_n,a,\gamma\}$, the implicit
function theorem gives
\begin{equation}
J_{(b,\tau)}F
\begin{bmatrix}
\partial b/\partial x \\
\partial \tau/\partial x
\end{bmatrix}
=
-
\left.
\frac{\partial F}{\partial x}
\right|_{b,\tau}.
\label{eq:laprune-implicit-differentiation}
\end{equation}
The system in \eqref{eq:laprune-implicit-differentiation} reuses the
same $2\times2$ constraint Jacobian. The resulting vector--Jacobian
product is evaluated analytically, so the backward pass does not need
to store the solver trajectory and retains only the score vector and
the quantities computed at the converged solution.

\section{Theoretical Analysis of the Hardness Parameter}
\label{sec:theory}

The hardness parameter $\gamma$ moves the mask from a dense allocation toward a near-binary top-$k$ mask. We characterize this transition through the fraction of mask entries that saturate to $0$ or $1$, and we do so in two complementary ways: a population prediction under a score model and a lower bound that holds for any score distribution. Throughout, let the scores follow a density $\rho$ with CDF $F$, and let $\mu(x)=\sigma((x-b)/t)$ denote the population mask, the value the layer assigns to a score equal to $x$. With $a=k/n$ and $\beta=a+(1-a)\gamma$, the budget and hardness constraints fix the first two population moments of the mask at $a$ and $\beta a$.

\paragraph{Mean-Field Prediction.}
The mean-field prediction replaces the $n$ random scores by their population density $\rho$ and works with the expected moments, which the empirical moments approach as $n$ grows. The pair $(t,b)$ then solves the two-moment fixed point
\begin{equation}
\label{eq:fixedpoint}
\begin{aligned}
\bar M_1(t,b)&:=\int\sigma\!\Big(\tfrac{x-b}{t}\Big)\rho(x)\,dx=a,\\
\bar M_2(t,b)&:=\int\left(\sigma\!\Big(\tfrac{x-b}{t}\Big)\right)^2 \rho(x)\,dx=\beta a.
\end{aligned}
\end{equation}
An entry is treated as numerically zero when $\mu(x)<\varepsilon$, where we use $\varepsilon=10^{-3}$. For $0<\varepsilon<1/2$ and $x\le b$, the Laplace CDF satisfies $\sigma(u)=\tfrac12 e^u$. Therefore,
\[
\mu(x)<\varepsilon
\iff
\tfrac12 e^{(x-b)/t}<\varepsilon
\iff
x<b+t\ln(2\varepsilon).
\]
Similarly,
\[
\mu(x)>1-\varepsilon
\iff
x>b-t\ln(2\varepsilon).
\]
Thus, under the population model, the fractions of entries close to zero and one are
\begin{equation}
\label{eq:sat}
\begin{aligned}
\mathrm{frac}_0
&=F\!\left(b+t\ln(2\varepsilon)\right),\\
\mathrm{frac}_1
&=1-F\!\left(b-t\ln(2\varepsilon)\right),\\
s_\varepsilon
&=\mathrm{frac}_0+\mathrm{frac}_1.
\end{aligned}
\end{equation}
The two conditions above place the nonsaturated values, those with $\varepsilon\le\mu(x)\le1-\varepsilon$, in the band $\left[\,b-t\lvert\ln(2\varepsilon)\rvert,\ b+t\lvert\ln(2\varepsilon)\rvert\,\right]$, of width $2t\lvert\ln(2\varepsilon)\rvert$. Increasing $\gamma$ decreases $t$, which narrows this band and moves the mask values closer to $0$ and $1$. We solve the two-moment system~\eqref{eq:fixedpoint} numerically for $(t,b)$. The Gaussian experiments use numerical quadrature. Closed-form Laplace moments and implementation details are given in the Supplementary Materials section "Numerical and Analytical Details."

\paragraph{Near-Binary Limit Under Regularity.}
Assume that $F$ has a continuous and positive density in a neighborhood of the unique $(1-a)$-quantile
\[
q_\star:=q_{1-a}.
\]
As $\gamma\to1$, we have $t\to0$. The budget constraint $\bar M_1=a$ then reduces to $1-F(b)=a$, so $b\to q_\star$. Moreover,
\[
\bar M_1-\bar M_2
=
\int_{\mathbb{R}}
\sigma\!\left(\frac{x-b}{t}\right)
\left[1-\sigma\!\left(\frac{x-b}{t}\right)\right]
\rho(x)\,dx.
\]
After the change of variables $u=(x-b)/t$, the first-order expansion is
\[
\bar M_1-\bar M_2
=t\rho(q_\star)I+o(t),
\qquad
I:=\int_{\mathbb{R}}\sigma(u)(1-\sigma(u))\,du.
\]
For the unit Laplace CDF, $I=\tfrac34$. Since
\[
\bar M_1-\bar M_2
=a-\beta a
=a(1-a)(1-\gamma),
\]
we obtain
\begin{equation}
\label{eq:tasympt}
t
\approx
\frac{a(1-\beta)}{I\rho(q_\star)}
=
\frac{4}{3}
\frac{a(1-a)(1-\gamma)}{\rho(q_{1-a})}.
\end{equation}

The nonsaturated fraction is the probability mass inside the transition interval:
\[
1-s_\varepsilon
=
F\!\left(b-t\ln(2\varepsilon)\right)
-
F\!\left(b+t\ln(2\varepsilon)\right).
\]
For small $t$,
\[
1-s_\varepsilon
\approx
2t\lvert\ln(2\varepsilon)\rvert\rho(q_\star).
\]
Substituting Equation~\eqref{eq:tasympt} cancels the density term and gives
\begin{equation}
\label{eq:fbinasympt}
1-s_\varepsilon
\approx
\frac{8}{3}
\lvert\ln(2\varepsilon)\rvert
a(1-a)(1-\gamma).
\end{equation}
Therefore, near the binary limit, the nonsaturated fraction decreases linearly with $1-\gamma$. Its leading-order coefficient depends on $a$ and $\varepsilon$, but not on the local score density. This result holds for continuous score distributions with a positive density at $q_{1-a}$. We make no density-independent claim for discrete score distributions, atoms at the quantile, or a vanishing density at $q_{1-a}$.

\paragraph{Worst-Case Guarantee for Any Distribution.}
The mean-field analysis describes typical, large-sample behavior. We next establish a worst-case counterpart: a deterministic lower bound on the fraction of near-zero mask entries that holds for \emph{every} score distribution and depends only on the first two mask moments. After sorting the mask values, let $f:[0,1]\to[0,1]$ denote the resulting nondecreasing rank profile. It satisfies
\[
\int_0^1 f(x)\,dx=a,
\qquad
\int_0^1 f(x)^2\,dx
=a\bigl(a+(1-a)\gamma\bigr).
\]

\begin{theorem}[Worst-Case Sparsity Floor]
\label{thm:worst}
Let $f:[0,1]\to[0,1]$ be nondecreasing and satisfy
\[
\begin{aligned}
\int_0^1 f(x)\,dx&=a,
&\int_0^1 f(x)^2\,dx&=a\beta,\\
\beta&=a+(1-a)\gamma.
\end{aligned}
\]
Then, for every $\varepsilon\in(0,\beta]$,
\[
\bigl|\{x\in[0,1]:f(x)<\varepsilon\}\bigr|
\ge
H_\star(\varepsilon),
\]
where
\[
H_\star(\varepsilon)
:=(1-a)
\max\left\{0,1-\frac{a(1-\gamma)}{\varepsilon}\right\}.
\]
This bound is tight over the stated moment class.
\end{theorem}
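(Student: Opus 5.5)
The plan is to turn the two moment constraints into a single variance identity and then apply a pointwise inequality on the set where $f$ is not small. Subtracting the constraints gives
\[
\int_0^1 f(x)\bigl(1-f(x)\bigr)\,dx \;=\; a-a\beta \;=\; a(1-a)(1-\gamma).
\]
Write $A_\varepsilon=\{x:f(x)<\varepsilon\}$ and $h=|A_\varepsilon|$. Every integrand $f(1-f)$ is nonnegative because $0\le f\le 1$, so the integral over the complement $A_\varepsilon^c$ is at most $a(1-a)(1-\gamma)$. On $A_\varepsilon^c$ we have $f\ge\varepsilon$, hence $f(1-f)\ge \varepsilon(1-f)$. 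This gives
\[
a(1-a)(1-\gamma)\;\ge\;\varepsilon\int_{A_\varepsilon^c}(1-f)\,dx
\;=\;\varepsilon\Bigl[(1-h)-\int_{A_\varepsilon^c}f\,dx\Bigr]
\;\ge\;\varepsilon\,[(1-h)-a].
\]
The last step uses $\int_{A_\varepsilon^c}f\le\int_0^1 f=a$. Rearranging yields $h\ge 1-a-a(1-a)(1-\gamma)/\varepsilon=(1-a)\bigl(1-a(1-\gamma)/\varepsilon\bigr)$. Combined with the trivial bound $h\ge0$, this is $H_\star(\varepsilon)$. Monotonicity of $f$ is not needed for the inequality; it only makes the level set an initial interval.

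For tightness, I read off the equality cases. Equality requires three things:
\begin{itemize}
\item $f\in\{\varepsilon,1\}$ on $A_\varepsilon^c$, so that $f(1-f)=\varepsilon(1-f)$ there;
\item $f=0$ on $A_\varepsilon$, so that $\int_{A_\varepsilon}f=0$ and $\int_{A_\varepsilon}f(1-f)=0$;
\item the value set is nondecreasing in $x$.
\end{itemize}
This suggests the three-level profile $f=0$ on $[0,h)$, $f=\varepsilon$ on $[h,h+m)$, and $f=1$ on $[h+m,1]$. The constraints are $m\varepsilon+c=a$ and $m\varepsilon(1-\varepsilon)=a(1-a)(1-\gamma)$, with $c=1-h-m$. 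Solving gives
\[
m=\frac{a(1-a)(1-\gamma)}{\varepsilon(1-\varepsilon)},\qquad c=a-m\varepsilon,\qquad h=1-a-m(1-\varepsilon).
\]
This $h$ equals the bound exactly. Feasibility requires $c\ge0$, i.e. $(1-a)(1-\gamma)\le 1-\varepsilon$, which is precisely $\varepsilon\le\beta$; this explains the hypothesis on $\varepsilon$. It also requires $h\ge0$, which holds exactly when the bound is positive. In that regime we get an admissible $f$ attaining $H_\star(\varepsilon)$. The value $\varepsilon$ itself is not counted in $\{f<\varepsilon\}$, so the strict inequality in the definition is harmless.

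The main obstacle is tightness in the regime where $1-a(1-\gamma)/\varepsilon\le 0$, where we need admissible $f$ with $|\{f<\varepsilon\}|=0$ (or arbitrarily small). My first attempt is a two-level profile $f\in\{v,1\}$ with $v\ge\varepsilon$. Matching the moments requires $m v(1-v)=a(1-a)(1-\gamma)$ and $mv+c=a$ with $m+c=1$. This gives $m=1-c$ and $v=(a-c)/(1-c)$, and the second constraint becomes a one-parameter equation in $c\in[0,a]$. By continuity, its solutions range from the dense end ($c=0$, $v=a$) to the hard end. I would check that some solution has $v\ge\varepsilon$ exactly when $a(1-\gamma)\ge\varepsilon$. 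If the two-level family falls short, I would add a third level and use an intermediate-value argument in the level $v$, so that $\{f<\varepsilon\}$ is empty and the bound $0$ is attained.
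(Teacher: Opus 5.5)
Your proof of the inequality is the paper's argument in split form. The paper integrates the single pointwise bound $\mathbf{1}\{y<\varepsilon\}\ge(1-y)(1-y/\varepsilon)=1-y-y(1-y)/\varepsilon$. You reach the same estimate, with identical slack, by splitting over $A_\varepsilon$ and $A_\varepsilon^c$ and discarding $\int_{A_\varepsilon}f(1-f)$ and $\int_{A_\varepsilon}f$. Your three-level extremizer on the values $0,\varepsilon,1$ for $a(1-\gamma)\le\varepsilon\le\beta$ is exactly the paper's, including the observation that $c\ge0$ is equivalent to $\varepsilon\le\beta$.

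The gap is the zero branch $\varepsilon\le a(1-\gamma)$. Tightness is part of the statement, and you leave this case as something you ``would check,'' with a fallback to a third level. Your description that the solutions ``range from the dense end to the hard end'' is also inaccurate: for fixed $\gamma$ there is exactly one solution, and it follows in one line. With $m=1-c$ and $v=(a-c)/(1-c)$, you have $mv=a-c$ and $1-v=(1-a)/(1-c)$, so $mv(1-v)=(1-a)v$. The second-moment constraint $mv(1-v)=a(1-a)(1-\gamma)$ therefore reduces to $v=a(1-\gamma)$, with $c=a\gamma/(1-a(1-\gamma))\in[0,a]$. The nondecreasing profile equal to $a(1-\gamma)$ on $[0,1-c)$ and to $1$ on $[1-c,1]$ is thus admissible. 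Its set $\{f<\varepsilon\}$ is empty precisely when $\varepsilon\le a(1-\gamma)$. This is the paper's two-level construction. With this computation added, your proof is complete, and neither a third level nor a continuity argument is needed.
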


\noindent The proof and extremizing constructions are given in the Supplementary Materials section "Proof of the Worst-Case Sparsity Floor."

The lower bound is zero when $\varepsilon\le a(1-\gamma)$, because the moment constraints can then be satisfied while keeping every mask value at least $\varepsilon$. The bound increases with $\gamma$. As $\gamma\to1$, it converges to $1-a$ for every fixed $\varepsilon>0$, recovering the fraction of zeros in a hard top-$k$ mask.


\section{Empirical Evaluation}
\label{sec:experiments}

We test five claims. First, LaPrune should retain LapSum's favorable empirical scalability. Second, $\gamma$ should control a normalized second moment while the budget remains fixed and provide a monotone soft-to-hard path at each budget. Third, the soft-to-hard path should improve recovery when the selected features are known. Fourth, budget-preserving training should remain useful in a sparse model with real activations. Fifth, we numerically verify that the generated masks respect the distribution-free worst-case sparsity floor established in Theorem~\ref{thm:worst}. The budget equation is exact mathematically and is enforced to numerical tolerance in the implementation. The normalized budget and second-moment residuals are approximately $10^{-7}$ in our diagnostic sweep. Additional results appear in the Supplementary Materials sections "Additional Hardness Diagnostics," "Feature-Selection Budget Sweep," "Sparse-Autoencoder Sweeps," "CIFAR-100 Top-$k$ Classification," and "Numerical and Analytical Details."

\subsection{Computational Scaling and Memory}
\label{sec:scaling}

\begin{figure}[h]
  \centering
  \includegraphics[width=\linewidth]{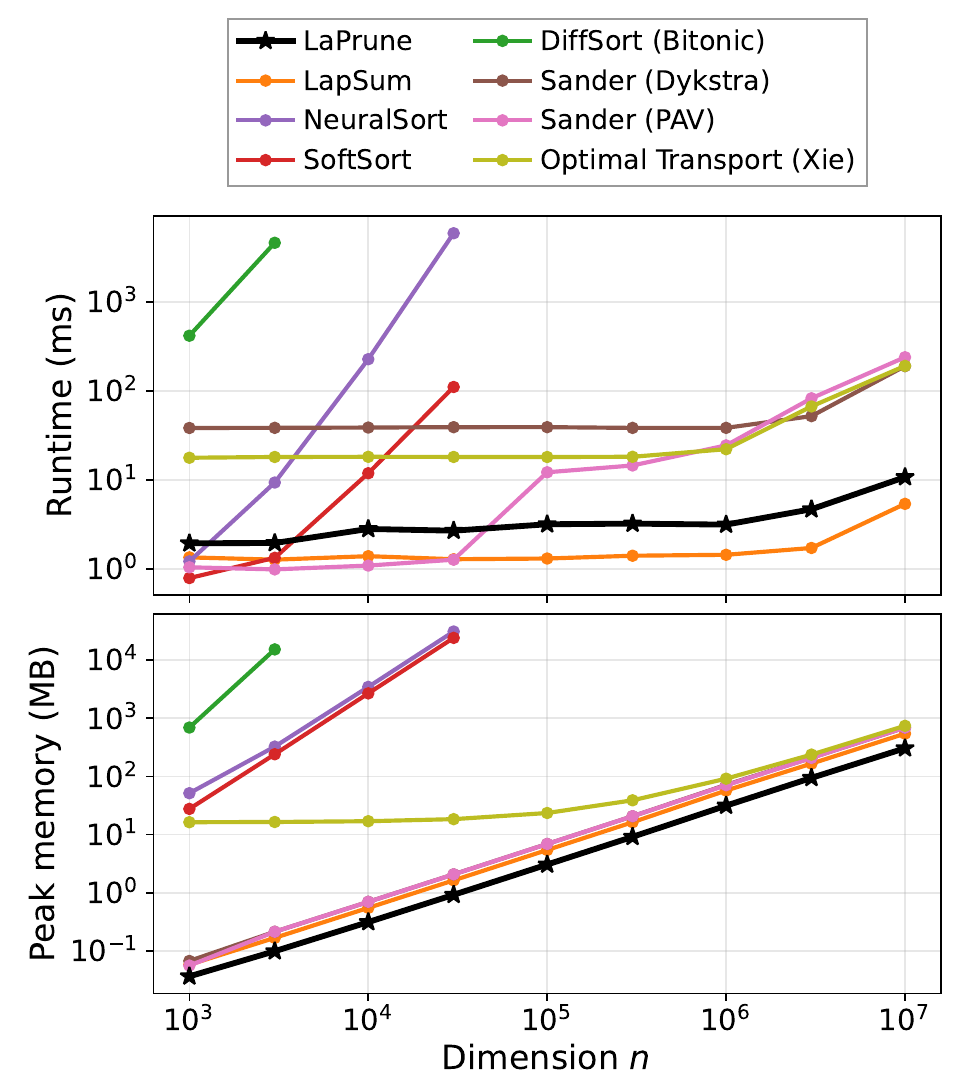}
  \caption{Combined forward-and-backward runtime (top) and peak memory (bottom) against score dimension $n$ on an A100, with $k=n/16$ and $\gamma=0.9$. LaPrune and LapSum reach $n=10^7$ while preserving per-sample mass to numerical tolerance.}
  \label{fig:scaling}
\end{figure}

We measure combined forward and backward runtime and peak GPU memory for dimensions up to $n=10^7$ at $k=n/16$ and $\gamma=0.9$ on an NVIDIA A100. LaPrune and LapSum reach $n=10^7$ while enforcing the per-sample mass constraint to numerical tolerance. Sorting relaxations exhaust memory at substantially smaller dimensions.

At $n=10^7$, LaPrune takes $10.75$ ms and 305 MB, compared with $5.38$ ms and 544 MB for LapSum. The second constraint therefore adds a factor of two in runtime while reducing measured peak memory. Sander-PAV and optimal transport reach the same dimension but are $18$--$22\times$ slower.

Each point uses a single float32 score vector, one warm-up call, and the median of 3--15 synchronized repetitions in an isolated subprocess. LaPrune uses its analytical implicit backward pass, whereas the baseline backward implementations follow their respective reference or local implementations. The full timing protocol and implementation details are given in the Supplementary Materials section "Scaling Protocol and Implementations."

\begin{figure*}[!t]
  \centering
  \includegraphics[width=0.9\textwidth]{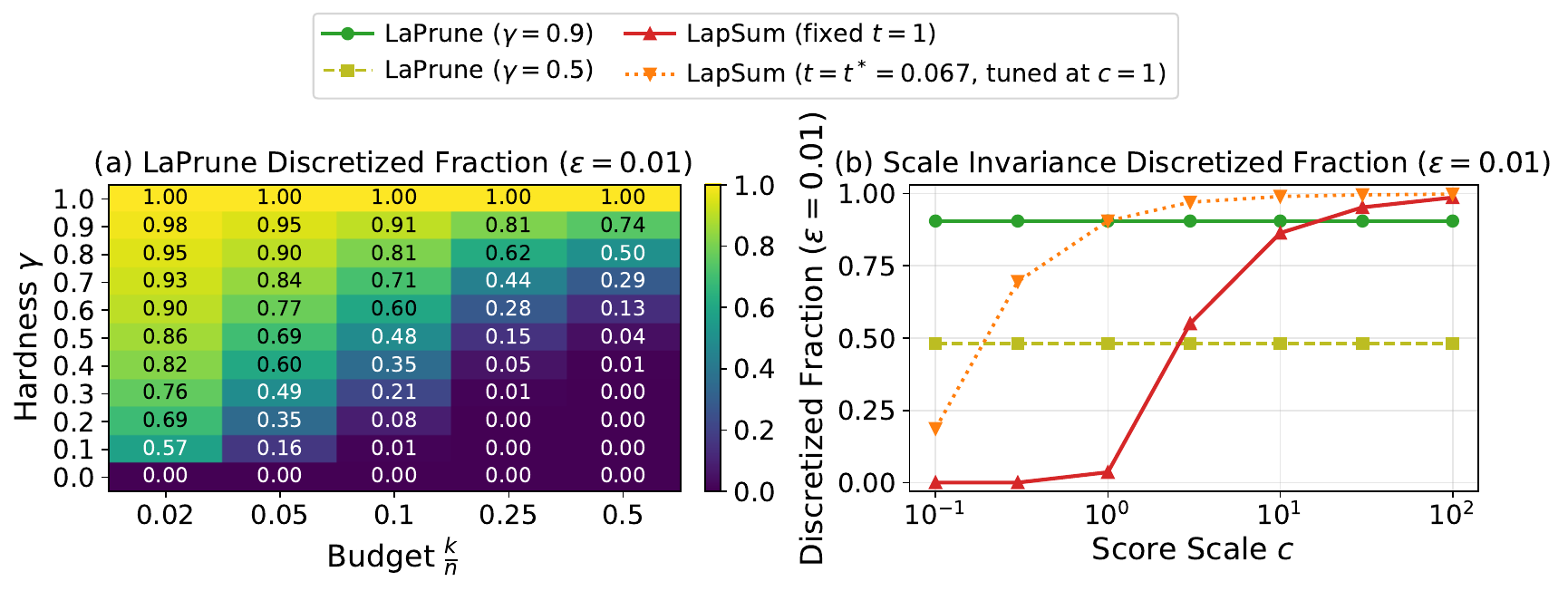}
  \caption{Normalized second-moment control on Gaussian scores with $n=2000$, averaged over five seeds. The vertical quantity is the fraction of entries within $\varepsilon=0.01$ of zero or one. Left: LaPrune provides a monotone path between the equal-mass and binary endpoints at each fixed budget $k/n$. Equal $\gamma$ need not yield equal saturation across budgets. Right: LaPrune keeps the same discretized fraction when scores are rescaled by $c$, while fixed or once-tuned LapSum temperatures drift.}
  \label{fig:discretization}
\end{figure*}

\subsection{Normalized Hardness and Budget Preservation}
\label{sec:hardness-exp}

Figure~\ref{fig:discretization} tests whether $\gamma$ provides a normalized second-moment control that cannot be reproduced by a fixed LapSum temperature. We draw $n=2000$ Gaussian scores, average over five seeds, and define an entry as discretized when $p_i<0.01$ or $p_i>0.99$. The points at $\gamma=0$ and $\gamma=1$ are evaluated using the explicit equal-mass and hard top-$k$ definitions, not a finite-temperature solve. At every tested budget, these endpoints give an equal-mass mask and a binary mask, respectively. At $\gamma=0.9$, the discretized fraction ranges from $0.74$ at $k/n=0.5$ to $0.98$ at $k/n=0.02$. Thus, equal values of $\gamma$ do not imply equal saturation across budgets. Instead, $\gamma$ parameterizes a monotone path between the two endpoints for each fixed budget.

The right panel rescales the same scores by $c\in[0.1,100]$. LaPrune holds the discretized fraction at $0.481$ for $\gamma=0.5$ and $0.905$ for $\gamma=0.9$. LapSum with $t=1$ changes from $0.000$ to $0.987$. A temperature tuned to match LaPrune at $c=1$ also drifts away from the target. This experiment isolates the role of the normalized second moment: the solved $t$ follows score scale, while $\gamma$ retains its meaning.

\begin{table*}[t]
  \centering
  \caption{Feature selection with 200 features, 10 informative features, and exact hard top-10 deployment. Values are mean $\pm$ standard deviation over 20 seeds. The near-binary fraction is the fraction of training-time mask entries within $\varepsilon=10^{-3}$ of zero or one.}
  \label{tab:feature-selection}
  \small
  \begin{tabular}{lccc}
    \toprule
    Method & Recovery F1 & Deploy ACC & Near-Binary Fraction \\
    \midrule
    LaPrune ($\gamma{:}\,0\!\to\!0.9$) (ours)& \textbf{0.855 $\pm$ 0.097} & 0.872 $\pm$ 0.028 & 0.869 $\pm$ 0.029 \\
    LaPrune ($\gamma=0.9$) (ours)& 0.800 $\pm$ 0.118 & \textbf{0.882 $\pm$ 0.028} & 0.908 $\pm$ 0.030 \\
    LapSum ($t=1$) & 0.795 $\pm$ 0.116 & 0.807 $\pm$ 0.037 & 0.000 $\pm$ 0.000 \\
    hard top-$k$ (STE) & 0.375 $\pm$ 0.155 & 0.743 $\pm$ 0.097 & 1.000 $\pm$ 0.000 \\
    \midrule
    DFTopK & 0.535 $\pm$ 0.106 & 0.852 $\pm$ 0.048 & 0.000 $\pm$ 0.000 \\
    \bottomrule
  \end{tabular}
\end{table*}


Two checks separate hardness control from numerical error. On $n=10^4$ scores, the normalized budget and second-moment residuals remain below approximately $10^{-7}$, so LaPrune and LapSum preserve the requested budget to numerical tolerance. On $n=2\times10^5$ scores with $\varepsilon=10^{-3}$, the population prediction in Equations~\eqref{eq:fixedpoint}--\eqref{eq:sat} matches the empirical saturated fraction with maximum error $0.0022$. By contrast, DFTopK reaches $39$ times the target mass for $k=n/100$ at $\tau=5$. The corresponding figures are in the Supplementary Materials section Additional Hardness Diagnostics."


\subsection{Differentiable Feature Selection}
\label{sec:feature-selection}

We construct 200-dimensional data with exactly ten informative features, using 1500 training and 500 test samples. A learned score for each feature passes through the tested mask into the same linear classifier. Every trained model is evaluated with an exact hard top-10 mask. Feature-recovery F1 directly measures whether the selected set matches the known informative set.

Table~\ref{tab:feature-selection} reports mean and standard deviation across 20 paired seeds. The annealed schedule increases $\gamma$ linearly by epoch from $0$ at epoch 0 to $0.9$ at epoch 149 of 150. This schedule gives F1 $0.855\pm0.097$, compared with $0.795\pm0.116$ for LapSum. The paired difference is $0.060\pm0.066$ with Wilcoxon $p=0.005$. The gain over fixed $\gamma=0.9$ is $0.055\pm0.067$ with $p=0.007$. DFTopK obtains recovery F1 $0.535\pm0.106$ after hard top-10 deployment.


\begin{table}[h]
  \centering
  \caption{Sparse autoencoders on ResNet-18/CIFAR-100 activations at
  dictionary size $m=4096$ and deployment budget $k=32$.
  Values are mean $\pm$ standard deviation over eight seeds.}
  \label{tab:sae}
  \small
  \begin{tabular}{@{}p{0.41\columnwidth}cc@{}}
    \toprule
    Method & FVU & Probe ACC (\%) \\
    \midrule
    LaPrune ($\gamma{:}\,0\!\to\!0.9$) (ours)
      & 0.3587 $\pm$ 0.0006
      & 56.59 $\pm$ 0.36 \\
    LaPrune ($\gamma=0.9$) (ours)
      & \textbf{0.3462 $\pm$ 0.0002}
      & \textbf{57.19 $\pm$ 0.40} \\
    LapSum ($t=1$)
      & 7.906 $\pm$ 0.123
      & 45.35 $\pm$ 0.33 \\
    hard top-$k$ (STE)
      & 1.155 $\pm$ 0.010
      & 46.89 $\pm$ 0.39 \\
    \midrule
    DFTopK
      & 0.5825 $\pm$ 0.0005
      & 51.03 $\pm$ 0.15 \\
    \bottomrule
  \end{tabular}
\end{table}

\subsection{Sparse Autoencoders on Real Activations}
\label{sec:sae}

\begin{figure*}[t]
  \centering
  \includegraphics[width=0.9\textwidth]{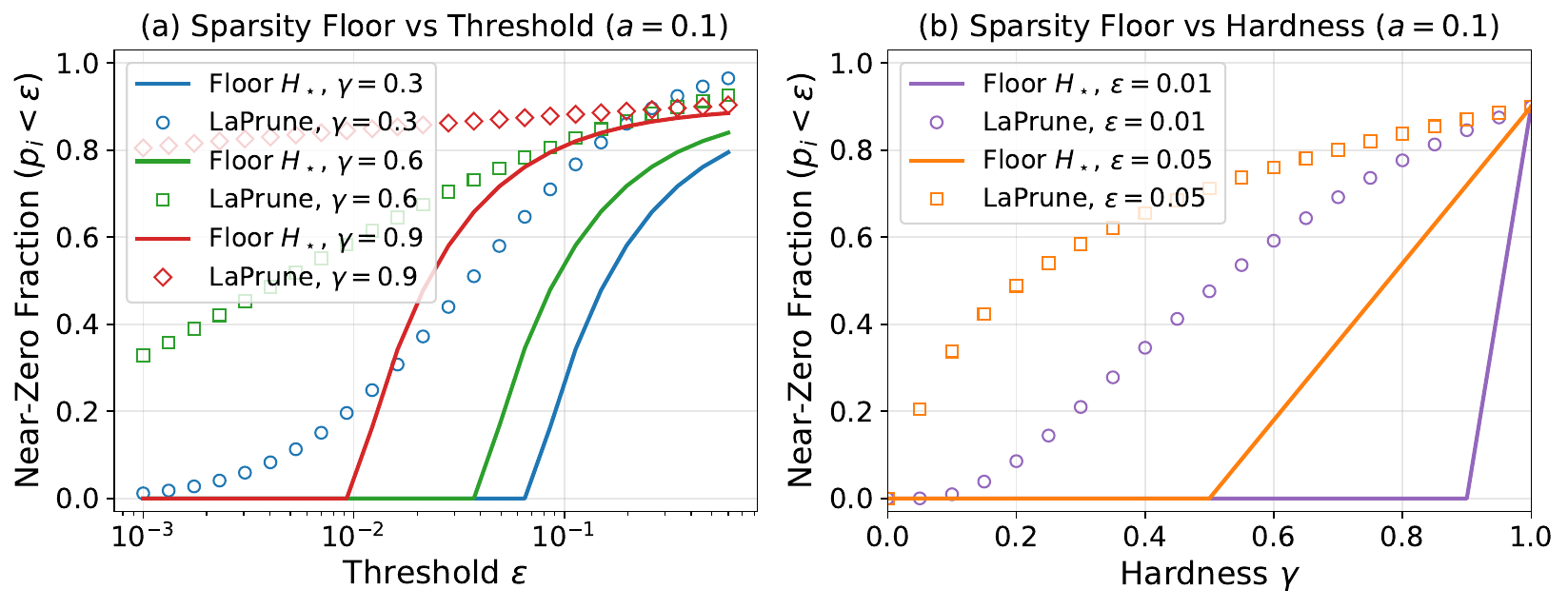}
  \caption{Direct numerical check of the worst-case sparsity floor at budget fraction $a=1/10$ on $n=2\times10^5$ Gaussian scores, averaged over five seeds. Open markers show the measured near-zero fraction of LaPrune masks and solid lines show the distribution-free lower bound $H_\star(\varepsilon)$ from Theorem~\ref{thm:worst}. Left: the threshold $\varepsilon$ varies at fixed hardness. Right: the hardness $\gamma$ varies at fixed threshold. Every measured point lies at or above its corresponding floor.}
  \label{fig:supp-worst-floor}
\end{figure*}

We train overcomplete sparse autoencoders on frozen ResNet-18 activations from CIFAR-100. The activation dimension is 512, the dictionary has $m=4096$ latents, and the deployment budget is $k=32$. Each model is trained for 30 epochs on 50,000 activations using mean-squared reconstruction error without an auxiliary dead-latent loss. The annealed schedule increases $\gamma$ linearly from $0$ at epoch 0 to $0.9$ at epoch 23, then holds it fixed through epoch 29. We report eight-seed results on 10,000 held-out activations. Reconstruction uses fraction of variance unexplained (FVU). A linear probe trained on the exact hard top-32 codes measures retained class information.


Table~\ref{tab:sae} shows that both LaPrune schedules outperform the other measured budget-preserving operators. Fixed $\gamma=0.9$ obtains FVU $0.3462\pm0.0002$ and probe accuracy $57.19\pm0.40\%$. Hard top-$k$ reaches FVU $1.155\pm0.010$ and probe accuracy $46.89\pm0.39\%$. LapSum remains diffuse at $t=1$, leading to FVU $7.906\pm0.123$. DFTopK obtains lower FVU than these two baselines. Simplex operators are not included because their training masks have mass one and data-dependent support. Forcing those models through hard top-32 deployment would confound operator quality with a change in mask scale and support size. In paired comparisons, fixed LaPrune improves both FVU and probe accuracy in all eight runs against LapSum, hard top-$k$, and DFTopK ($p=0.0078$ for each Wilcoxon test).


\subsection{Worst-case sparsity floor}
We directly check Theorem~\ref{thm:worst} on masks produced by the implemented operator. We draw $n=2\times10^5$ Gaussian scores, fix the budget fraction at $a=1/10$, and average the near-zero fraction over five seeds. Figure~\ref{fig:supp-worst-floor} compares the measured fraction $\Pr(p_i<\varepsilon)$ with the distribution-free floor
\[
H_\star(\varepsilon)
=(1-a)\max\left\{0,1-\frac{a(1-\gamma)}{\varepsilon}\right\}.
\]
The measured fraction remains at or above the floor for every tested threshold and hardness value. The Gaussian masks need not attain equality because the theorem is tight over the full moment class and not for every score distribution.


\section{Conclusion}

This work develops a mathematical framework for controlling the softness of differentiable top-$k$ masks without changing their prescribed mass. LaPrune expresses budget and hardness as separate constraints, links the normalized second moment to mask saturation, and characterizes the transition from equal allocation to discrete selection. The resulting theory includes a population description, an asymptotic law near the binary limit, and a distribution-free sparsity guarantee.

The experiments are designed as controlled tests of these mathematical claims, not as an effort to maximize performance on individual benchmarks. They show that the proposed parameter retains its interpretation under score rescaling, that the constrained masks can support learning in synthetic and real-feature settings, and that the operator remains computationally practical at large dimensions. Their role is therefore to establish that the theoretical construction is numerically realizable and relevant to sparse learning.

\paragraph{Limitations.}
The present formulation uses a Laplace-CDF mask and controls hardness through one normalized moment. Its guarantees describe mask geometry, not the complete optimization dynamics or downstream generalization of models trained with the layer. Numerical conditioning also becomes more difficult close to the binary endpoint. Extending the analysis to other mask families, alternative concentration constraints, and training-dependent score distributions may yield a broader theory of differentiable sparsity. Improving endpoint solvers and studying the framework in larger sparse architectures are natural directions for future work.

\bibliography{aaai2027}

\clearpage
\appendix

\section{Supplementary Materials}

\subsection{Additional Hardness Diagnostics}

\paragraph{Mean-field agreement.}
We draw $n=2\times10^5$ Gaussian scores and sweep $\gamma$ at budget fractions $a\in\{1/2,1/10\}$. An entry is saturated when $p_i<\varepsilon$ or $p_i>1-\varepsilon$, with $\varepsilon=10^{-3}$. Figure~\ref{fig:supp-mean-field} overlays finite-sample measurements with the population fixed-point and saturation equations from the main paper. The maximum absolute discrepancy is $0.0022$ for $a=1/2$ and $0.0006$ for $a=1/10$.

\begin{figure}[h]
  \centering
  \includegraphics[width=0.9\linewidth]{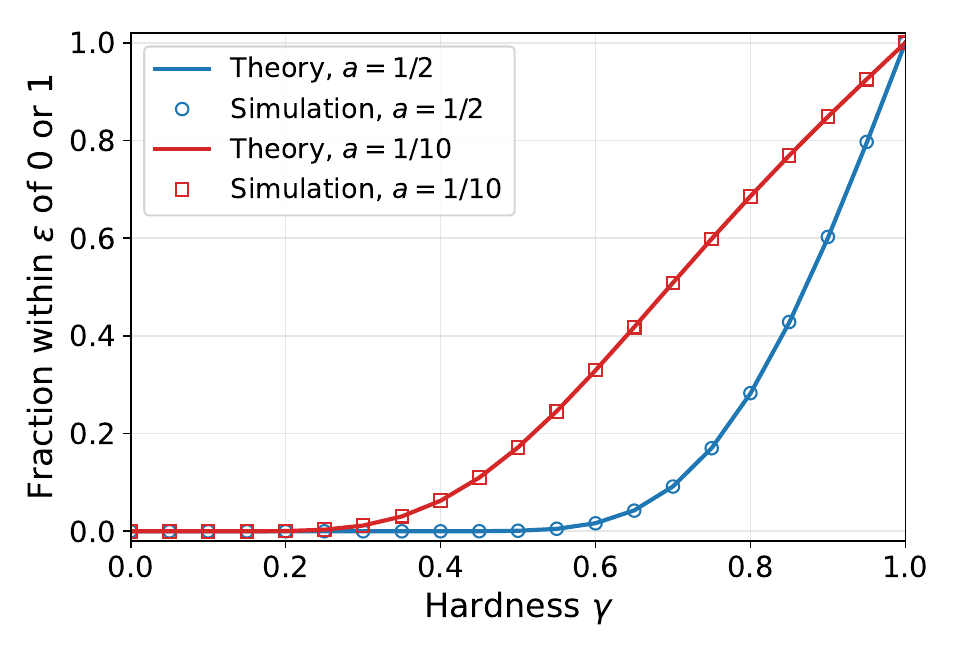}
  \caption{Population prediction and finite-sample saturated fraction on $n=2\times10^5$ Gaussian scores with $\varepsilon=10^{-3}$. Lines show the population model and open markers show simulated masks.}
  \label{fig:supp-mean-field}
\end{figure}

\paragraph{Sensitivity to $\varepsilon$.}
Figure~\ref{fig:supp-eps} repeats the population and simulation comparison for $\varepsilon\in\{10^{-4},10^{-3},10^{-2}\}$. The threshold changes where a finite mask entry is counted as saturated, but the monotone transition and agreement between theory and simulation persist.

\begin{figure}[h]
  \centering
  \includegraphics[width=0.9\linewidth]{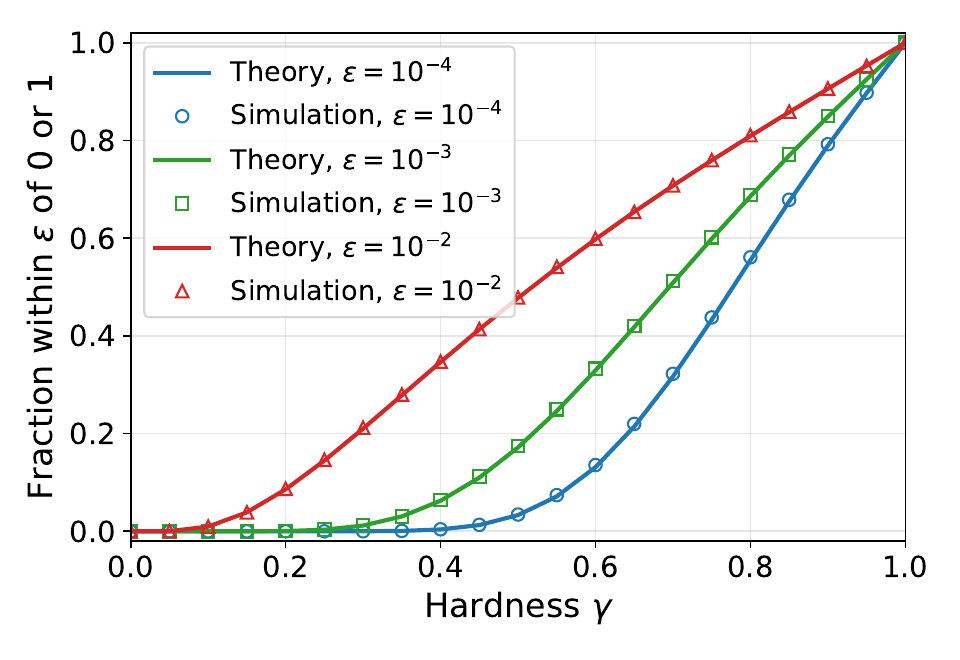}
  \caption{Population prediction and simulation for three numerical saturation thresholds. Each curve uses LaPrune masks with the budget equation enforced to numerical tolerance at $a=1/10$.}
  \label{fig:supp-eps}
\end{figure}

\paragraph{Budget drift.}
Figure~\ref{fig:supp-budget-drift} measures the realized mass of DFTopK against budget-preserving operators on $n=10^4$ Gaussian scores. For $k=n/100$, the DFTopK mass grows from $1.03$ times the target at $\tau=0.05$ to $38.75$ times the target at $\tau=5$. LaPrune and LapSum remain at the requested mass to numerical tolerance.

\begin{figure}[h]
  \centering
  \includegraphics[width=0.9\linewidth]{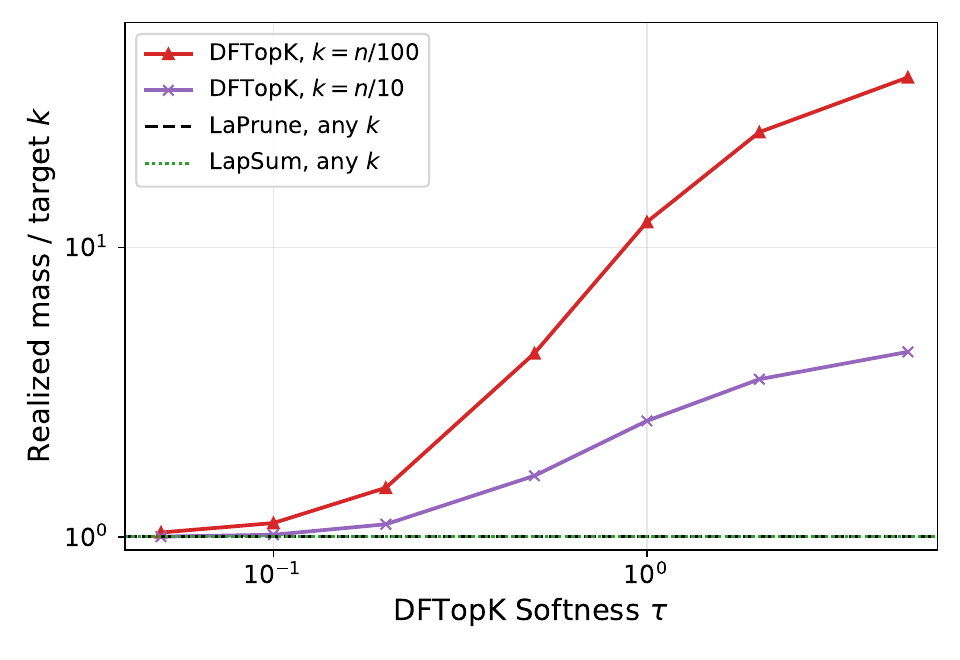}
  \caption{Realized soft-mask mass against DFTopK softness $\tau$ on $n=10^4$ Gaussian scores. DFTopK overshoots sparse budgets as $\tau$ increases. LaPrune and LapSum enforce the budget equation.}
  \label{fig:supp-budget-drift}
\end{figure}

\subsection{Scaling Protocol and Implementations}
\label{app:scaling-protocol}

The scaling benchmark uses one float32 score vector per call (batch size one), with $k=n/16$ and $\gamma=0.9$, on an NVIDIA A100-SXM4-40GB. We use Python 3.9.25, PyTorch 2.8.0 with CUDA 12.8, \texttt{difftopk} 0.2.0, NumPy 2.0.2, and Numba 0.60.0. Every method--dimension pair runs in a fresh subprocess with a 240-second timeout. After one untimed warm-up call, we reset PyTorch's peak-memory counter. CUDA events measure 15 repetitions for $n\le10^5$, seven for $10^5<n\le10^6$, and three for larger $n$. We synchronize the device before reading the events and report the median. Peak memory is \texttt{torch.cuda.max\_memory\_allocated} over the timed calls.

LaPrune uses the compiled pure-PyTorch Newton forward and the custom analytical implicit VJP in this paper. LapSum uses its sort--\texttt{logcumsumexp} barrier solve and custom one-constraint implicit VJP. NeuralSort, SoftSort, and DiffSort use \texttt{difftopk} and PyTorch autograd. The Sander Dykstra and PAV curves use local PyTorch ports of the authors' isotonic formulation with a custom block-averaging backward. The optimal-transport curve uses a local two-bin Sinkhorn implementation with 200 iterations. The first 199 are detached and the final iteration uses autograd. Thus, each curve includes the backward implementation intended for that operator.

The lower measured peak memory of LaPrune (305 MB) than LapSum (544 MB) is an implementation effect, not a consequence of adding fewer constraints. LaPrune is sort-free. Each Newton pass performs elementwise reductions, and its custom backward stores only the score vector and solved scalars. The measured LapSum implementation sorts the full vector and materializes sorted values, indices, and prefix and suffix \texttt{logcumsumexp} work arrays while locating the barrier. These temporary arrays determine its higher peak, whereas LaPrune's 30 Newton passes reuse fixed-size buffers without accumulating per-iteration state.



\subsection{Feature-Selection Budget Sweep}

Figure~\ref{fig:supp-feature-selection} varies the deployment budget over $k\in\{3,5,10,20,40\}$. Each method is evaluated with the same exact hard top-$k$ rule. Annealed LaPrune remains competitive with the fixed-$\gamma$ variant and at or above the measured non-LaPrune budget-preserving baselines throughout the sweep.

\begin{figure}[h]
  \centering
  \includegraphics[width=\linewidth]{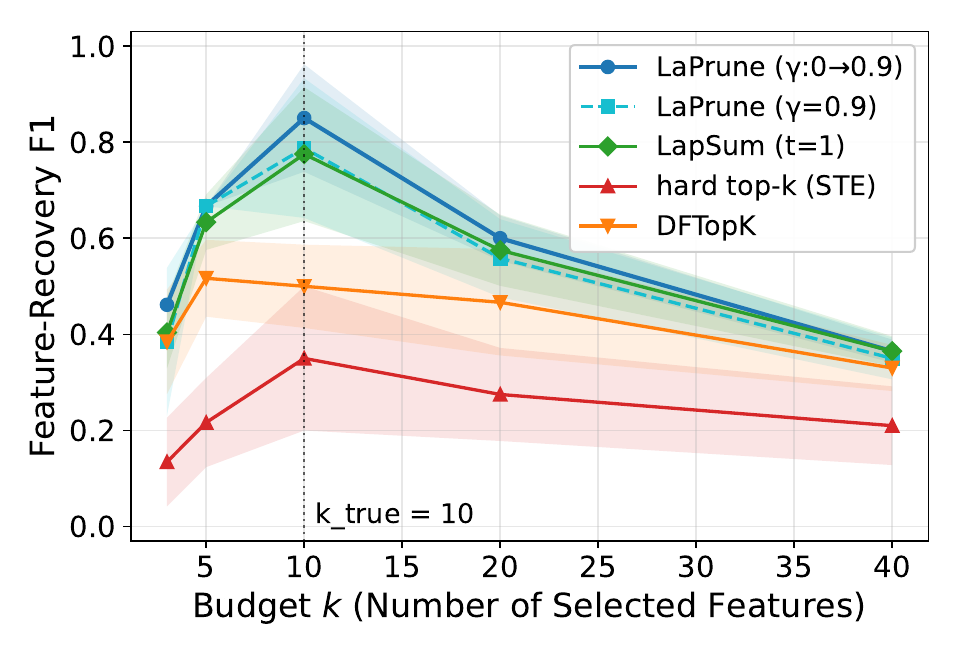}
  \caption{Feature-recovery F1 against deployment budget $k$ for 200-dimensional synthetic data with ten informative features. Curves are means over eight seeds and bands show one standard deviation. The vertical line marks the true feature count.}
  \label{fig:supp-feature-selection}
\end{figure}

\subsection{Sparse-Autoencoder Sweeps}

\paragraph{Architecture, training, and evaluation.}
Let $x\in\mathbb{R}^{512}$ be a standardized frozen ResNet-18 activation. The SAE computes
\[
\begin{aligned}
u&=W_{\mathrm{enc}}(x-b_{\mathrm{pre}})+b_{\mathrm{enc}},\\
h&=p(u)\odot u,\\
\hat x&=W_{\mathrm{dec}}h+b_{\mathrm{pre}}.
\end{aligned}
\]
where $p(u)$ is the method-specific training mask. We minimize the mean squared reconstruction error $\lVert \hat x-x\rVert_2^2/512$ with no sparsity penalty or dead-latent auxiliary loss. We initialize the columns of $W_{\mathrm{dec}}\in\mathbb{R}^{512\times4096}$ from independent standard Gaussians and normalize each to unit norm, set $W_{\mathrm{enc}}=W_{\mathrm{dec}}^\top$ only at initialization, and initialize both biases to zero. The encoder and decoder are then untied. After every Adam update, we renormalize each decoder column to unit norm. Training uses all 50,000 componentwise standardized training activations, Adam with learning rate $10^{-3}$, batches of 4096, and 30 epochs.

For the annealed model at epoch $e$, the exact schedule is
\[
\begin{aligned}
\gamma_e
&=\min\!\left(
0.9,\,
\frac{0.9e}{\lfloor0.8(30-1)\rfloor}
\right).
\end{aligned}
\]
so $\gamma$ reaches $0.9$ at epoch 23 and remains there. Evaluation replaces the training mask with hard top-32 and computes FVU on all 10,000 test activations.
\[
\begin{aligned}
\mathrm{FVU}
&=
\frac{\sum_j\lVert \hat x_j-x_j\rVert_2^2}
{\sum_j\lVert x_j-\bar x_{\mathrm{test}}\rVert_2^2}.
\end{aligned}
\]

The linear probe is trained on hard top-32 codes from all 50,000 training activations and evaluated on the 10,000 test codes. It is a linear $4096\!\to\!100$ softmax classifier with PyTorch's default initialization, trained for 60 epochs with cross-entropy, Adam at learning rate $10^{-2}$, weight decay $10^{-4}$, and batches of 8192. Features are extracted from the avgpool output of an ImageNet-1K-pretrained ResNet-18 and standardized using training-set means and standard deviations.

The main SAE experiment fixes $m=4096$ and $k=32$. Figure~\ref{fig:supp-sae-k} changes the deployment budget at fixed dictionary size. Figure~\ref{fig:supp-sae-m} changes dictionary size at fixed budget.

\begin{figure*}[h]
  \centering
  \includegraphics[width=0.88\textwidth]{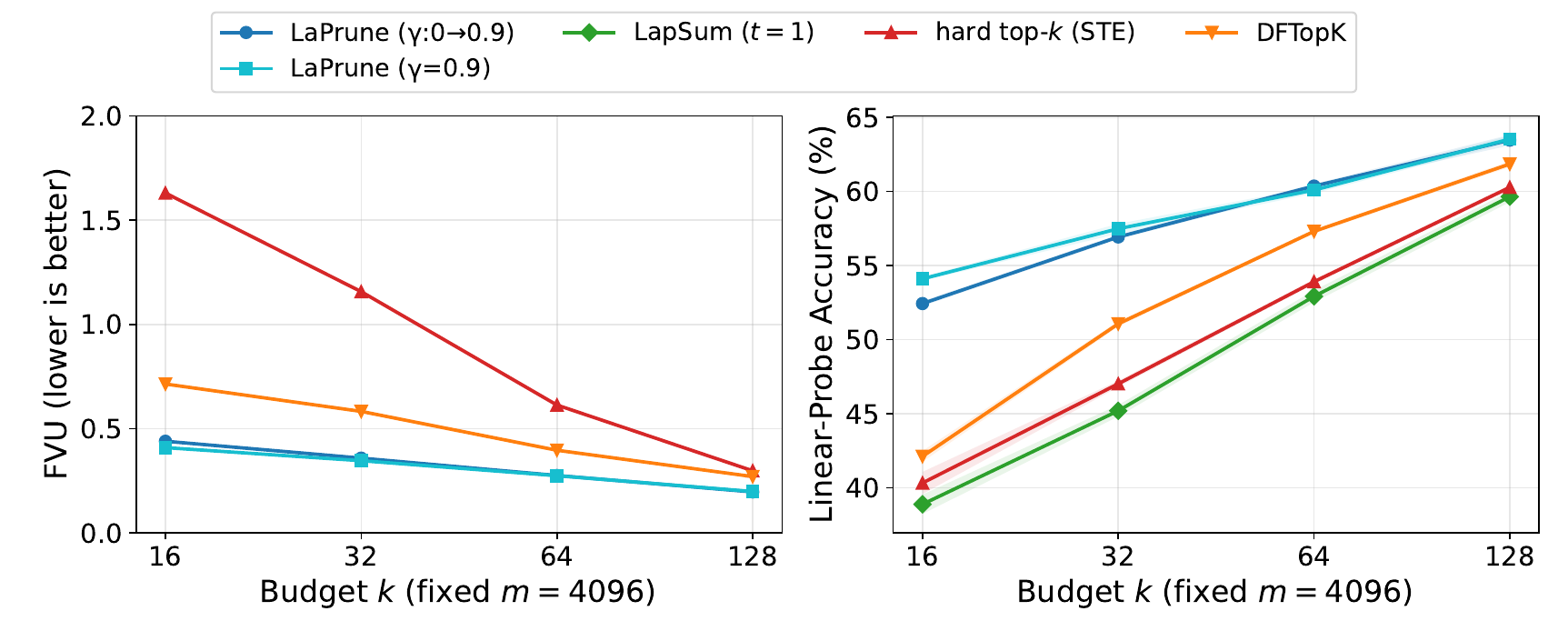}
  \caption{Sparse-autoencoder FVU and linear-probe accuracy against deployment budget $k$ at fixed dictionary size $m=4096$. Curves show means over three seeds and bands show one standard deviation.}
  \label{fig:supp-sae-k}
\end{figure*}

\begin{figure*}[h]
  \centering
  \includegraphics[width=0.88\textwidth]{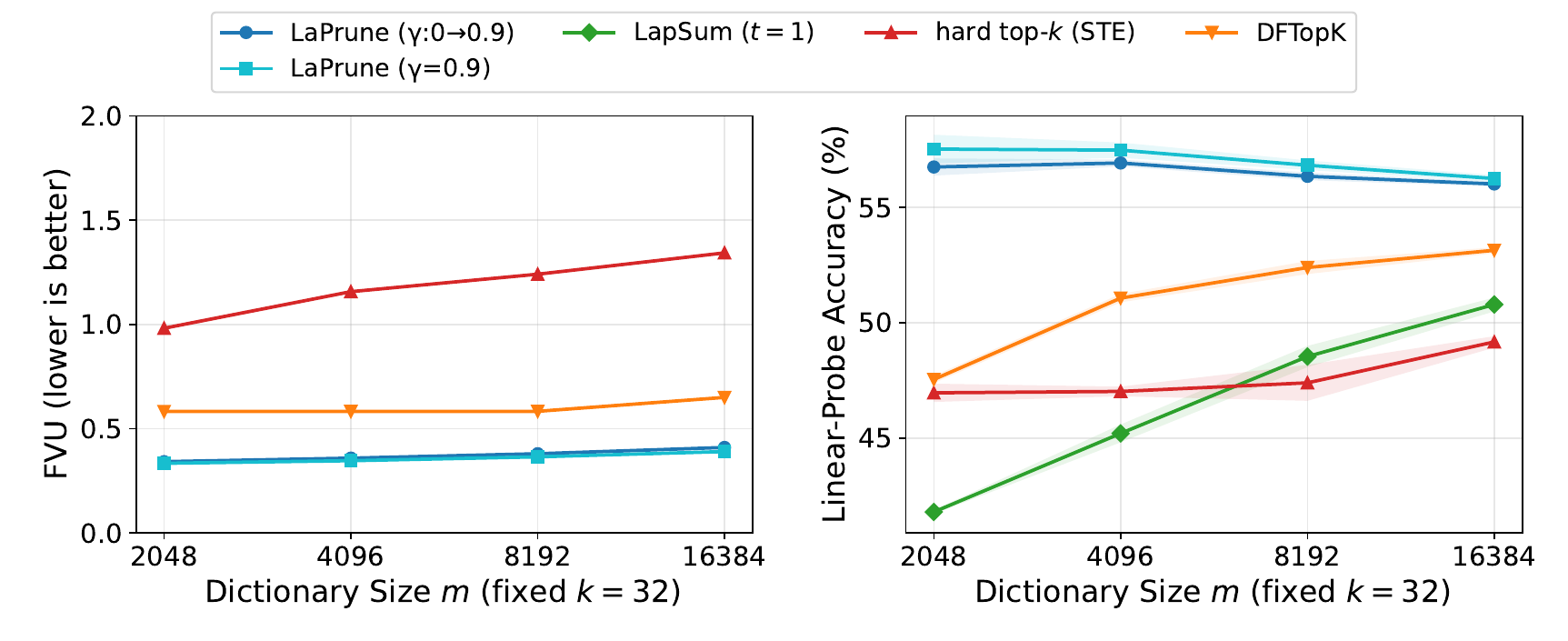}
  \caption{Sparse-autoencoder FVU and linear-probe accuracy against dictionary size $m$ at fixed deployment budget $k=32$. Curves show means over three seeds and bands show one standard deviation.}
  \label{fig:supp-sae-m}
\end{figure*}

\begin{table}[h]
  \centering
  \caption{CIFAR-100 design ablation under the pure top-5 objective. All entries use the same ResNet-18 and 200-epoch protocol.}
  \label{tab:supp-cifar-ablation}
  \small
  \begin{tabular}{lcc}
    \toprule
    Configuration & ACC@1 & ACC@5 \\
    \midrule
    fixed $\gamma=0.3$ & \textbf{72.8} & 82.4 \\
    fixed $\gamma=0.5$ & 71.8 & 86.0 \\
    fixed $\gamma=0.7$ & 67.4 & \textbf{88.0} \\
    fixed $\gamma=0.9$ & 1.0 & 5.0 \\
    fixed $\gamma=0.99$ & 2.0 & 27.1 \\
    \midrule
    anneal $0.1\to0.8$, clip norm 5 & 72.0 & 87.4 \\
    anneal $0.1\to0.8$, no clipping & 71.9 & 86.9 \\
    anneal $0.1\to0.9$, clip norm 5 & 1.0 & 5.0 \\
    \bottomrule
  \end{tabular}
\end{table}

\subsection{CIFAR-100 Top-$k$ Classification}

We also evaluate each operator with a pure top-5 cross-entropy loss using a ResNet-18 trained from scratch on CIFAR-100. For LaPrune, $\gamma$ increases linearly by epoch from $0.1$ at epoch 0 to $0.8$ at epoch 199. Table~\ref{tab:supp-cifar100} reports a single-seed comparison using the competitor softness values specified by the LapSum benchmark. The Smooth top-$k$ row uses the loss of Berrada, Zisserman, and Kumar~\cite{berrada2018smoothlossfunctionsdeep}. Across two seeds, LaPrune achieves ACC@1 $71.93\pm0.07$ and ACC@5 $87.13\pm0.30$.

\begin{table}[h]
  \centering
  \caption{CIFAR-100 classification with a ResNet-18, 200 training epochs, and pure top-5 loss. All methods use the same random seed.}
  \label{tab:supp-cifar100}
  \small
  \begin{tabular}{lcc}
    \toprule
    Method & ACC@1 & ACC@5 \\
    \midrule
    Smooth top-$k$ & 57.3 & 90.3 \\
    NeuralSort & 13.4 & 86.3 \\
    SoftSort & 60.4 & 89.8 \\
    SinkhornSort & 62.4 & 90.2 \\
    DiffSortNets & 62.4 & 90.6 \\
    LapSum & 65.9 & 92.3 \\
    LaPrune ($\gamma{:}\,0.1\!\to\!0.8$) & \textbf{72.0} & 87.4 \\
    \bottomrule
  \end{tabular}
\end{table}

Table~\ref{tab:supp-cifar-ablation} separates the effects of hardness, clipping, and the endpoint of the schedule. Fixed $\gamma$ exposes an ACC@1--ACC@5 trade-off. Training collapses near the binary endpoint. Gradient clipping at norm 5 has little effect for the $0.1\to0.8$ schedule, while extending the schedule to 0.9 collapses under this pure top-5 objective.


\subsection{Reproducibility Details}

All reported random sweeps use fixed integer seeds and store per-seed results before aggregation. The feature-selection models train for 150 epochs with Adam at learning rate $10^{-2}$. The table uses 20 seeds and the budget sweep uses eight. The SAE models train for 30 epochs with Adam at learning rate $10^{-3}$ and batch size 4096. Linear probes train for 60 epochs with Adam at learning rate $10^{-2}$ and weight decay $10^{-4}$. The SAE main table uses eight seeds. Both SAE sweeps use three. CIFAR-100 uses Adam at learning rate $10^{-3.25}$, batch size 100, cosine decay, 200 epochs, and gradient clipping at norm 5 for the main LaPrune schedule. Scaling results use combined forward and backward passes on an NVIDIA A100-SXM4-40GB. The solver stopping, fallback, and warm-start rules are given below.

\subsection{Numerical and Analytical Details}
\label{app:numerics}

\paragraph{Solver Tolerances and Clipping.}
The Newton solve for the joint budget and second-moment system defined in the main paper performs at most 30 iterations and freezes a batch row once
\[
\begin{aligned}
|F_1|&<10^{-7}\max(k,1),\\
|F_2|&<10^{-7}\max(\beta k,1).
\end{aligned}
\]
We report absolute residuals $e_1=|F_1|$ and $e_2=|F_2|$, and relative residuals $e_1/k$ and $e_2/(\beta k)$. After 30 iterations, a row enters the robust fallback if either residual exceeds $10^{-4}$ times the corresponding target clipped below at one. Let $s_r$ be the row-wise score standard deviation. Each Newton increment is clipped to
\[
\begin{aligned}
\Delta b&\in[-(8s_r+1),\,8s_r+1],\\
\Delta\tau&\in[-1.5,1.5].
\end{aligned}
\]
and the updated log-temperature is clipped to $\tau\in[-12,8]$. The barrier itself has no global clipping interval.

\paragraph{Bisection Fallback.}
The fallback uses 48 outer and 48 inner bisection iterations. The outer log-temperature bracket is $\tau\in[-12,8]$. At a candidate $t=e^\tau$, the inner barrier bracket is
\[
\begin{aligned}
b_{\mathrm{lo}}&=r_{\min}-40(t+s_r)-1,\\
b_{\mathrm{hi}}&=r_{\max}+40(t+s_r)+1.
\end{aligned}
\]
The inner solve moves the bracket according to the sign of $\sum_i p_i-k$. The outer solve moves it according to the sign of $\sum_i p_i^2-\beta k$. Both use their fixed iteration limits without a separate stopping tolerance. The default is 48 outer and 48 inner iterations. The CIFAR-100 training runs use 20 and 20, the discretization-control experiment uses 40 and 40, and the supplementary $\varepsilon$-sensitivity sweep uses 24 and 24. The remaining reported experiments retain the default. In a documented diagnostic with 100 independent Gaussian rows, $n=100$, and $a=0.05$, the fallback was required for $0\%$, $27\%$, and $97\%$ of rows at $\gamma=0.3$, $0.9$, and $0.99$, respectively. This increase reflects the ill-conditioning as $t\downarrow0$. Interior values are restricted to the numerical bracket above, so values very near an endpoint may invoke the fallback.

\paragraph{Asymptotic Warm Start.}
For the Gaussian approximation described in the main-paper discussion ``Computing LaPrune: Joint Solves and Implicit Gradients,'' the implementation estimates the density at $b_0$ using the row standard deviation $\hat s$ and $z_a=\Phi^{-1}(1-a)$:
\[
\hat\rho(b_0)=\frac{\phi(z_a)}{\max(\hat s,10^{-9})}.
\]
We clip $\hat\rho(b_0)$ below at $10^{-4}$ and initialize
\[
\begin{aligned}
t_0
&=
\max\left\{
10^{-3},
\frac{4a(1-a)(1-\gamma)}
{3\hat\rho(b_0)}
\right\}.
\end{aligned}
\]
This is a Gaussian plug-in density estimate and therefore has no kernel bandwidth. The optional distribution-free large-$n$ initializer instead draws $m=\lceil\sqrt n\rceil$ scores, uses their empirical $(1-a)$-quantile $\hat b$, and applies a rectangular-window estimator
\[
\begin{aligned}
\hat\rho(\hat b)
&=
\frac{1}{2hm}
\sum_{j=1}^{m}
\mathbf{1}\{|r_j-\hat b|<h\},\\
h
&=
\frac{1}{2m}
\sum_{j=1}^{m}
|r_j-\operatorname{median}(r)|
+10^{-6}.
\end{aligned}
\]
again clipped below at $10^{-4}$.

\paragraph{Analytical Implicit VJP.}
The implementation evaluates the vector--Jacobian product from the implicit-differentiation system in the main-paper discussion ``Computing LaPrune: Joint Solves and Implicit Gradients'' analytically, using the reductions defined there.

For upstream derivatives $g_i=\partial\mathcal{L}/\partial p_i$, let
\[
\begin{aligned}
G_0&=\sum_i g_i d_i,\\
G_1&=\sum_i g_i d_i z_i,\\
D&=S_0U_1-S_1U_0.
\end{aligned}
\]
Then
\[
\begin{aligned}
\frac{\partial\mathcal{L}}{\partial r_i}
&=
\frac{d_i}{t}(g_i-C_i),\\
C_i
&=
\frac{G_0(U_1-S_1p_i)}{D}
+
\frac{G_1(S_0p_i-U_0)}{D}.
\end{aligned}
\]
For completeness, with $c_a=2a+(1-2a)\gamma$,
\[
\begin{aligned}
\frac{\partial\mathcal{L}}{\partial a}
&=-\frac{n}{2D}Q_a,\\
Q_a
&=(S_1c_a-2U_1)G_0\\
&\quad +(2U_0-S_0c_a)G_1,\\
\frac{\partial\mathcal{L}}{\partial\gamma}
&=
-\frac{na(1-a)}{2D}(S_1G_0-S_0G_1).
\end{aligned}
\]
These derivatives apply only to the interior finite-temperature solution.

\paragraph{Closed-Form Moments for Laplace Scores.}
For $\rho=\mathrm{Laplace}(0,1)$ and $b\geq0$, direct integration gives
\begin{align}
\bar M_1
&=\frac{e^{-b}-t^2e^{-b/t}}{2(1-t^2)}=a,
\label{eq:lapclosed1}\\
\bar M_2
&=\frac{(4+t)e^{-b}}{2(1+t)(4-t^2)}
-\frac{t^2e^{-2b/t}}{4(4-t^2)}=\beta a.
\label{eq:lapclosed2}
\end{align}
The apparent singularities at $t=1$ and $t=2$ are removable and are evaluated by continuity. For $b<0$, symmetry gives
\[
\begin{aligned}
\bar M_1(b)&=1-\bar M_1(-b),\\
\bar M_2(b)&=1-2\bar M_1(-b)+\bar M_2(-b).
\end{aligned}
\]

\subsection{Monotonicity and Uniqueness of the Temperature}
\label{app:monotonicity}

\noindent\textit{Proof of the Unique-Temperature Theorem.}
Let
\[
\begin{aligned}
z_i(t)&=\frac{r_i-b(t)}{t},
&w_i(t)&=\sigma'(z_i(t)),\\
\bar z_w(t)&=\frac{\sum_i w_i(t)z_i(t)}
{\sum_i w_i(t)}.
\end{aligned}
\]
Differentiating the budget constraint $\sum_i\sigma(z_i(t))=k$ gives $\sum_iw_i(t)z_i'(t)=0$. Since
\[
z_i'(t)=-\frac{b'(t)+z_i(t)}{t},
\]
we obtain
\[
b'(t)=-\bar z_w(t),
\qquad
z_i'(t)=-\frac{z_i(t)-\bar z_w(t)}{t}.
\]
Consequently,
\begin{align*}
\frac{\mathrm d M_2(t)}{\mathrm d t}
&=2\sum_i\sigma(z_i(t))w_i(t)z_i'(t)\\
&=-\frac{2}{t}\sum_iw_i(t)\sigma(z_i(t))
       \bigl(z_i(t)-\bar z_w(t)\bigr)\\
&=-\frac{2}{t}\left(\sum_iw_i(t)\right)
  \operatorname{Cov}_w\bigl(z(t),\sigma(z(t))\bigr).
\end{align*}
Because $\sigma$ is strictly increasing, the weighted covariance is positive whenever the scores are nonconstant, hence $M_2'(t)<0$. Continuity follows from continuity of the barrier solution. For pairwise distinct scores, the LapSum endpoint limits give $M_2(t)\to k$ as $t\downarrow0$ and $M_2(t)\to ak$ as $t\to\infty$. The intermediate value theorem and strict monotonicity therefore give a unique $t_\gamma$ for every $\gamma\in(0,1)$. \hfill$\square$

\subsection{Proof of the Worst-Case Sparsity Floor}
\label{app:worst-proof}

\noindent\textit{Proof of the Worst-Case Sparsity Floor Theorem.}
Let $U$ be uniform on $[0,1]$ and set $Y=f(U)$. Then
\[
0\leq Y\leq1,
\qquad
\mathbb{E}Y=a,
\qquad
\mathbb{E}Y^2=a\beta.
\]
Write
\[
D:=\mathbb{E}[Y(1-Y)]
=a(1-\beta)
=a(1-a)(1-\gamma).
\]
For any $y\in[0,1]$ and $\varepsilon>0$,
\[
\mathbf{1}\{y<\varepsilon\}
\geq
(1-y)\left(1-\frac{y}{\varepsilon}\right):
\]
the right-hand side is at most one when $y<\varepsilon$ and is nonpositive when $y\geq\varepsilon$. Taking expectations gives
\begin{equation}
\label{eq:worst-direct}
\Pr(Y<\varepsilon)
\geq
1-a-\frac{D}{\varepsilon}
=(1-a)\left(1-\frac{a(1-\gamma)}{\varepsilon}\right).
\end{equation}
Combining Equation~\eqref{eq:worst-direct} with the trivial lower bound zero proves the inequality.

For tightness, set $c=a(1-\gamma)$. If $0<\varepsilon\leq c$, take a two-level nondecreasing step function with values $c$ and $1$, assigning mass
\[
\lambda=\frac{a-c}{1-c}
\]
to the value $1$. Its moments are $a$ and $a\beta$, and it has no value strictly below $\varepsilon$, so the zero branch is tight.

For $c\leq\varepsilon\leq\beta$, assign masses
\[
\begin{aligned}
m_\varepsilon&=\frac{a(1-\beta)}{\varepsilon(1-\varepsilon)},\\
m_1&=a-\varepsilon m_\varepsilon,\\
m_0&=1-m_\varepsilon-m_1
\end{aligned}
\]
to the ordered values $0,\varepsilon,1$. These masses are nonnegative on this interval and satisfy
\[
\varepsilon m_\varepsilon+m_1=a,
\qquad
\varepsilon^2m_\varepsilon+m_1=a\beta.
\]
Moreover,
\[
m_0=(1-a)\left(1-\frac{a(1-\gamma)}{\varepsilon}\right),
\]
so equality holds in Equation~\eqref{eq:worst-direct}. Hence both branches are tight. \hfill$\square$

\paragraph{Extension to $\beta<\varepsilon\leq1$.}
For completeness, let $V=\operatorname{Var}(Y)=a(1-a)\gamma$. Cantelli's inequality gives the tight lower bound
\[
\Pr(Y<\varepsilon)
\geq
\frac{(\varepsilon-a)^2}{(\varepsilon-a)^2+V}.
\]
Equality is attained by a nondecreasing two-level step function with values
\[
u=a-\frac{V}{\varepsilon-a}
\qquad\text{and}\qquad
\varepsilon.
\]
The condition $\varepsilon\geq\beta$ is exactly what ensures $u\geq0$.

\end{document}